\documentclass{article}

\usepackage[preprint,nonatbib]{tackling_climate_workshop_style}
\makeatletter
\renewcommand{\@noticestring}{Preprint.}
\makeatother

\usepackage[utf8]{inputenc}
\usepackage[T1]{fontenc}
\usepackage{url}
\usepackage{booktabs}
\usepackage{tabularx}
\usepackage{amsmath,amssymb,amsfonts,amsthm}
\usepackage{nicefrac}
\usepackage{microtype}
\usepackage{graphicx}
\usepackage{xspace}
\usepackage{natbib}
\usepackage{algorithm}
\usepackage[noend]{algpseudocode}
\usepackage{comment}
\excludecomment{reviewresponse}
\usepackage{dsfont}

\usepackage{xcolor}
\usepackage{hyperref}
\usepackage[nameinlink,noabbrev]{cleveref}
\newcommand{\ic}[1]{\textcolor{magenta}{\textbf{IC-} #1}}
\newcommand{\fc}[1]{\textcolor{blue}{\textbf{FC-} #1}}

\hypersetup{hidelinks}

\newtheorem{proposition}{Proposition}
\newtheorem{theorem}{Theorem}

\newcommand{\windowRho}{2.5\xspace}                  
\newcommand{\targetBudget}{96\xspace}                
\newcommand{\randomSeedCount}{30\xspace}             

\newcommand{\devRows}{33\xspace}                     
\newcommand{\devMDPs}{10\xspace}                     

\newcommand{\informativeRows}{17\xspace}             

\newcommand{\allWin}{25\xspace}
\newcommand{\allTie}{8\xspace}

\newcommand{\infWin}{14\xspace}
\newcommand{\infTie}{3\xspace}

\newcommand{\critVsBetter}{6/9/2\xspace}

\newcommand{\heldoutRows}{12\xspace}
\newcommand{\heldoutNonzero}{8\xspace}
\newcommand{\heldoutWindowedWin}{5\xspace}
\newcommand{\heldoutWindowedTie}{3\xspace}

\newcommand{\queueStates}{36\xspace}
\newcommand{\inventoryStates}{41\xspace}
\newcommand{\busStates}{90\xspace}

\newcommand{\windowEvaluationCap}{96\xspace}
\newcommand{\aedesSixtyOneEvals}{9\xspace}
\newcommand{\fireWindowCandidates}{58\xspace}
\newcommand{\fireZeroEvals}{3\xspace}
\newcommand{\forestWindowCandidates}{45\xspace}
\newcommand{\forestZeroEvals}{4\xspace}
\newcommand{\reserveWindowCandidates}{608\xspace}
\newcommand{\reserveZeroEvals}{15\xspace}
\newcommand{\fishThirteenCands}{499\xspace}

\newcommand{\aedesEvalCount}{\windowEvaluationCap}

\newcommand{\aedesSixtyOneBinary}{0.1981\xspace}
\newcommand{\gouldEightCandidates}{203\xspace}
\newcommand{\gouldEightBinary}{30.000\xspace}
\newcommand{\gouldEightWindowed}{9.344\xspace}
\newcommand{\fireBinary}{0.8227\xspace}
\newcommand{\forestBinary}{64.92\xspace}
\newcommand{\reserveBinary}{0.1484\xspace}
\newcommand{\fishThirteenBinary}{0.08264\xspace}
\newcommand{\fishThirteenWindowed}{0.002592\xspace}

\newcommand{\sonaFiftyBinary}{0.1441\xspace}
\newcommand{\sonaFiftyWindowed}{0.02331\xspace}

\newcommand{\sonaFiftyCandidates}{10{,}569\xspace} 

\newcommand{\gouldThirteenStates}{162\xspace}
\newcommand{\gouldThirteenK}{13\xspace}
\newcommand{\gouldThirteenCandidates}{374\xspace}
\newcommand{\gouldThirteenEvals}{39\xspace}
\newcommand{\gouldThirteenBinary}{14.380\xspace}
\newcommand{\gouldThirteenBinaryAgreement}{91.98\%\xspace}
\newcommand{\gouldThirteenBinaryDisagreements}{13\xspace}
\newcommand{\gouldThirteenWindowedAgreement}{100\%\xspace}

\newcommand{\gouldWitnessState}{38\xspace}
\newcommand{\gouldWitnessBinaryAction}{0 (DN)\xspace}
\newcommand{\gouldWitnessOptimalAction}{1 (FG)\xspace}
\newcommand{\aedesBinary}{0.03576\xspace}
\newcommand{\aedesWindowed}{0.000386\xspace}

\xspaceaddexceptions{\%,\,}
\DeclareRobustCommand{\MDP}{MDP\xspace}
\DeclareRobustCommand{\MDPs}{MDPs\xspace}

\DeclareRobustCommand{\MDPsFull}{Markov decision processes (MDPs)\xspace}
\DeclareRobustCommand{\MOMDP}{MOMDP\xspace}

\DeclareRobustCommand{\KMDP}{\texorpdfstring{$K$-MDP}{K-MDP}\xspace}

\DeclareRobustCommand{\AKMDP}{\texorpdfstring{A-$K$-MDP}{A-K-MDP}\xspace}

\DeclareRobustCommand{\WindowedAKMDP}{\texorpdfstring{Windowed A-$K$-MDP}{Windowed A-K-MDP}\xspace}
\DeclareRobustCommand{\SONA}{SONA\xspace}
\DeclareRobustCommand{\RepairedBinary}{endpoint-repaired Binary\xspace}

\title{\WindowedAKMDP}

\author{%
  Xiangwen Yang \\
  Monash University \\
  Melbourne, Australia \\
  \texttt{wayne.yang@monash.edu} \\
  \And
  Frankie Cho \\
  Monash University \\
  Melbourne, Australia \\
  \texttt{frankie.cho@monash.edu} \\
  \And
  Iadine Chades \\
  Monash University \\
  Melbourne, Australia \\
  \texttt{iadine.chades@monash.edu} \\
}

\begin{document}
\maketitle

\begin{abstract}
\MDPsFull are used to support decision-making in conservation of biodiversity, but policies, even over small state spaces, can
be difficult to interpret for conservation managers. \KMDP methods address this problem by building simpler MDPs with at
most $K$ abstract states. We show that the previously proposed \AKMDP algorithm that relies on selecting a discretisation divisor using binary search can skip better abstract states. To fix this issue, we propose \WindowedAKMDP, an algorithm that generates every distinct feasible partition induced within a declared divisor window and
evaluates candidates until reaching the ideal value loss ($J=0$) or exhausting the
family of candidates. Across 33 $K$-MDP instances, Windowed improved 25 and tied 8.

\end{abstract}

\section{Introduction}

\begin{reviewresponse}

\ic{MDPs in conservation - and K-MDPs - Frankie can have a go - keep it short}

\fc{Frankie had a go... will add a bit more lit review below. Note that we will need to fix the citation key bug... not sure if it's caused by the comment package. I wrote my comments without using AI. Fine to pass the text to the LLM for editorial suggestions but I would try to avoid directly using LLM outputs to replace entire sentences/ paragraphs as it may carry detectable AI watermarks.}

\fc{Main comments:}

\fc{- Are we presenting the algorithm to solve the Windowed K-MDP? It's in the initial draft but not sure where it is in in this draft}

\fc{- Note 4 page limit: I suggest we restrict the results to 1 MDP only to reduce the length of the results section and make our points clearer. We also need to introduce the case study setting - I'll work on it tomorrow.}

\fc{- We have to try to reduce the use of jargon such as "certificate", "semantics", "search diagnostic", "budget closure", "certificate-frontier" and "fixed-vocabulary". Are there simpler ways to say the same thing?
}
\end{reviewresponse}

\begin{reviewresponse}

\fc{Markov Decision Processes (MDP) are increasingly applied to advise policy-setting in climate, biodiversity and ecosystem protection settings. For example, MDPs were used to identify optimal strategies to protect critical ecosystems from decline, following a structured ``adaptive management" learning-by-doing approach \cite{chades2012momdps}. MDP solutions are also critical in advising policymakers in setting climate policies that determine optimal climate change mitigation policies and navigate uncertainties \citep{Traeger2014-lb}. Because many sustainability MDPs involve many states and actions, human decision-makers responsible for implementing these solutions need to distil complex optimal policies into clear, interpretable rules that can be carefully interrogated, socialized, trusted, and sometimes defended in a regulatory setting. Explainable AI (XAI) is therefore exceptionally vital in sustainability applications, where interpretation and trust in AI solutions are critical to success in practice.}

\fc{To make MDP solutions interpretable, researchers are increasingly turning to state abstraction algorithms that construct what are known as $K$-MDPs \cite{fm2020}. $K$-MDP algorithms abstract a complex solved MDPs with $N$ states into a smaller $K$ number of ``abstract" states \cite{fm2020,fm2024}, where $K$ is a compactness budget restricting the maximum number of abstract states in the simplified version of the MDP. $K$-MDPs provide a key XAI solution to sustainability challenges because they can help simplify complex sequential decision-making rules for practitioners. For example, \cite{Chades2012-zb} solved an 819-state, 4-action MDP problem to manage two competing threatened species, the sea otter and the Northern Abalone, resulting in an optimal policy that carefully accounted for conflicts and interactions among various ecological management priorities. The optimal policy consisted of optimal actions across all 819 states, making it difficult to explain to policymakers. $K$-MDPs provided a compelling solution to this challenge by making these MDP solutions far more human-readable, successfully representing the same MDP in only 10 states while only losing 2.8\% of its value compared to the value of the optimal policy.}

\fc{While $K$-MDP algorithms have been successful in practice, many limitations still hinder trust towards these methods. Many existing methods, such as the $ \phi_{a^*_d} $ and $ \phi_{Q_d} $ algorithms in Ferrer-Mestres et al. \cite{fm2020}, use binary search methods to intelligently search for an optimal discretization parameter $d$ to ensure that the number of states induced by the parameter $N(d)$ is less than the compactness budget $K$. Binary search algorithms, when applied in this context, make a prohibitive assumption that the state-budget feasibility predicate $\mathbb{1}\{N(d)\le K\}$ is monotone in $d$, which may not be true. This means that the binary search algorithm can potentially miss solutions that achieve the compactness budget $K$ while minimizing the value loss as part of the search process}

 \fc{In this paper we present a new solution to find near-optimal interpretable solutions for MDPs, known as Windowed A-$K$-MDPs, that overcome the assumption of monotonicity as part of the discretization process. }
\end{reviewresponse}

\MDPs have been used to inform sequential decisions under uncertainty in many conservation of biodiversity problems \citep{marescot2013complex}. However, MDP policies even for small states space can be difficult to interpret by humans, preventing opportunities to guide managers on the ground. $K$-MDPs algorithms have been proposed to address this problem by reducing the original MDP state space to $K$ abstract states \citep{fm2020,fm2024}. For example, the original sea-otter and northern-abalone model, with 819 states and four actions \citep{Chades2012-zb} was transformed into a ten-state \AKMDP with a small loss of performance (value loss = 2.8\%) \citep{fm2020}. 

In \AKMDP, the action/value abstraction $\phi_{a^*_d}$ groups states sharing an optimal
action and a discretised optimal value. Its published divisor-selection loop
uses midpoint binary updates \citep{fm2020}. At each iteration, the update direction is determined by whether the resulting abstraction contains at most $K$ states,
$N(d)\le K$. However, this condition is not monotone in the divisor $d$, so the binary search can
skip a feasible partition with lower value loss. Instead, our proposed \WindowedAKMDP
derives the exact values at which the induced partition changes. In its complete
branch, it generates each distinct feasible partition inside a fixed interval
around the repaired binary anchor and evaluates until reaching no value loss ($J=0$) or
exhausting the family. Its guarantee is local to that declared interval and
abstraction family.

Compared with the published binary-search procedure, we make three contributions. First, we provide a counterexample showing that the number of abstract states is not monotone in \(d\), and that binary search can therefore miss a feasible partition with lower decision loss. Second, within a specified  divisor window, we derive the exact values of \(d\) at which the induced partition can changes. Third, we compare \WindowedAKMDP with a repaired Binary baseline using the same construction and evaluation procedure. Across the conservation problems considered, \WindowedAKMDP finds compact policies with lower decision loss when better partitions are missed by binary search.

\section{Problem formulation}

Let $M=(\mathcal S,\mathcal A,P,r,\gamma)$ be a finite \MDP, where
$\mathcal S$ is the state space, $\mathcal A$ is the action set,
$\varnothing\ne\mathcal A(s)\subseteq\mathcal A$ is the set of actions
available at state $s$, $P$ is the transition kernel, $r$ is the reward, and
$\gamma\in[0,1)$ is the discount factor. Assume $V^*(s)\ge0$ and
$V_{\max}:=\max_s V^*(s)>0$. Lowest-index tie-breaking fixes an optimal
deterministic policy $\pi^*$. 

A \KMDP $M_K =(\mathcal S_K,\mathcal A, P_K, r_K, \gamma, \phi)$ is an MDP with at most $K$ states and solving a \KMDP problem means finding the best reduced state space ($|\mathcal S_K| \leq K$) so that the value loss between the ground MDP and the \KMDP is minimal. Here, we study the empirically best-performing \KMDP variant reported by \citet{fm2020} that uses the action/value abstraction, formally:
\begin{equation}
  \phi_{a^*_d}(s)=\bigl(\pi^*(s),\lceil V^*(s)/d\rceil\bigr),
  \label{eq:family}
\end{equation}
and write $\phi_d:=\phi_{a^*_d}$. States with the same action/value-bin pair
form one abstract state, so $N(d):=
  \left|\left\{\phi_{d}(s):s\in\mathcal S\right\}\right|$
is the induced number of abstract states. A divisor is feasible when
$N(d)\le K$. More generally, let
$N(\phi):=|\{\phi(s):s\in\mathcal S\}|$, so $N(\phi_d)=N(d)$. We restrict
attention to budgets satisfying
$N(V_{\max})\le K$, the minimum block count attainable by this divisor family
on $(0,V_{\max}]$. 




For a state mapping $\phi$, we define its abstract state set $\mathcal S_\phi:=\{\phi(s):s\in\mathcal S\}$
  and constituent blocks $B_k:=\{s\in\mathcal S:\phi(s)=k\}$.
Following \citet{abel2016near,fm2020}, we use uniform within-block weights
$\omega_\phi(s\mid k):=1/|B_k|$ for $s\in B_k$. The action set for abstract state $k$ is
$\mathcal A_\phi(k):=\bigcap_{s\in B_k}\mathcal A(s)$.
When $\phi=\phi_d$, $\mathcal A_\phi(k)\ne\varnothing$ for every
$k\in\mathcal S_\phi$, because $\pi^*(s)$ is the same for all $s\in B_k$ and
this common action belongs to $\mathcal A(s)$ for every $s\in B_k$.



For $k,k'\in\mathcal S_\phi$ and
$a\in\mathcal A_\phi(k)$, the abstract reward and transition kernel are
\begin{align*}
  r_\phi(k,a)
    &:=\sum_{s\in B_k}\omega_\phi(s\mid k)r(s,a),\\
  P_\phi(k'\mid k,a)
    &:=\sum_{s\in B_k}\omega_\phi(s\mid k)
       \sum_{s'\in B_{k'}}P(s'\mid s,a).
\end{align*}

Together with $\gamma$, these quantities define the abstract \MDP. We solve it
using the same lowest-index tie-breaking and lift its policy as $\widetilde\pi_\phi(s):=\pi_\phi(\phi(s))$.





Following \citet[Eq.~(1)]{fm2020}, we evaluate a fixed abstraction $\phi$ by
its maximum statewise value loss on the original \MDP,
$J(\phi):=\max_{s\in\mathcal S}
[V^*(s)-V^{\widetilde\pi_\phi}(s)]_+$, where
$[x]_+:=\max\{x,0\}$.
%
For a fixed $\phi$, $J(\phi)$ is the inner maximisation in their K-MDP gap
objective. Whereas their objective minimises over all admissible reduced state
spaces, Windowed compares only candidates induced within the declared divisor
window. We retain the worst-state criterion because an average under a chosen
initial-state distribution could conceal a large loss at an infrequently
weighted but decision-critical state.

\section{Windowed search}

Consider an MDP with two states that share the same optimal action, and have optimal values $V^*=(2,3)$. For a budget of $K=1$ abstract state, the value bin indices at $d=3/2,2,3$ are respectively
$(2,2),(1,2),(1,1)$. Thus, for
$F_K(d):=\mathds{1}\{N(d)\le K\}$, the two states are grouped together at $d=3/2$, separated at $d=2$, and grouped together again at $d=3$. Hence, $N(d) = 1,2,1$, respectively, showing that the condition $N(d) \leq K$ is not monotone in $d$. Thus, a binary trajectory can discard a feasible
interval; \cref{app:nonmono} gives a three-state missed-partition example.
Our conservative baseline, \RepairedBinary, retains the feasible endpoint
$d=V_{\max}$ before applying the midpoint updates of \citet{fm2020}. 

We address this problem by exploiting the structure of the abstraction in \cref{eq:family}. 
For \(V^*(s)>0\) and \(d>0\), the integer assignment \(\lceil V^*(s)/d\rceil\) remains constant except when \(d\) crosses a value \(V^*(s)/m\), where \(m\) is a positive integer. Because \(\pi^*(s)\) is fixed, the induced partition can change only at one of these divisor values. For a fixed closed window $W=[L,R]$ with $L>0$, we define \(\mathcal B(W)\) as the set containing the window endpoints and all values of \(d\) at which the induced partition may change:
\begin{equation}
  \mathcal B(W)
  :=
  \{L,R\}
  \cup
  \left\{
    \frac{V^*(s)}{m}:
    V^*(s)>0,\;
    m\in\mathbb N_+,\;
    L<\frac{V^*(s)}{m}<R
  \right\}.
  \label{eq:breakpoints}
\end{equation}
 These points are analytic, generally non-uniform, and not a numerical grid.
First-occurrence canonicalisation, denoted $\operatorname{can}(\phi)$, relabels
blocks as $0,1,\ldots$ in ground-state order so that label permutations of the
same partition are deduplicated.


\begin{theorem}[Closed-window completeness]
\label{thm:complete}
Assume $V^*(s)\ge0$ for every $s\in\mathcal S$, $0<L<R$, fixed tie-breaking
for $\pi^*$, and
$\{d\in[L,R]:N(d)\le K\}\ne\varnothing$.
Let $L=\beta_0<\dots<\beta_B=R$ be the sorted distinct points in
$\mathcal B(W)$. Then $\phi_d$ is constant on
$[\beta_i,\beta_{i+1})$ for every $i=0,\ldots,B-1$.
Consequently, the complete enumeration that evaluates one representative from
each interval, together with the endpoint $R$, realises every distinct
partition induced on $[L,R]$ and attains
$\min\{J(\phi_d):d\in[L,R],\;N(d)\le K\}$.
\end{theorem}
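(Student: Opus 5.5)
The proof has two parts: first show that the ceiling map $d\mapsto\lceil v/d\rceil$ is piecewise constant with jumps only at points of the form $v/m$; then deduce that enumerating one representative per piece, plus $R$, reaches every partition and hence the constrained minimum.

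\textbf{Step 1 (constancy of each coordinate).} Fix a state $s$ and write $v=V^*(s)$. If $v=0$, then $\lceil 0/d\rceil=0$ for every $d>0$, so this coordinate is constant on all of $[L,R]$. If $v>0$, the function $g(d)=v/d$ is continuous and strictly decreasing on $(0,\infty)$. Its ceiling changes value only where $g$ crosses an integer. Because $\lceil x\rceil=m$ exactly for $x\in(m-1,m]$, we have $\lceil v/d\rceil=m$ if and only if $d\in[v/m,\,v/(m-1))$, with the convention $v/0=\infty$. So on $(0,\infty)$ the map $d\mapsto\lceil v/d\rceil$ is constant on each left-closed, right-open interval whose endpoints are consecutive points of $\{v/m:m\in\mathbb N_+\}$.

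\textbf{Step 2 (intervals between consecutive breakpoints).} Take $[\beta_i,\beta_{i+1})$. By the definition of $\mathcal B(W)$ in \cref{eq:breakpoints}, no point $v/m$ lies in the open interval $(\beta_i,\beta_{i+1})$. The point $v/m$ could equal $\beta_i$, but since the pieces from Step 1 are closed on the left, the value at $\beta_i$ agrees with the value just to its right. Hence each $\lceil V^*(s)/d\rceil$ is constant on $[\beta_i,\beta_{i+1})$. The action component $\pi^*(s)$ does not depend on $d$ once the tie-breaking rule is fixed. Therefore the whole map $\phi_d$ is literally constant on $[\beta_i,\beta_{i+1})$, not merely constant up to a relabelling of blocks.

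\textbf{Step 3 (completeness and attainment).} The intervals $[\beta_i,\beta_{i+1})$ for $i=0,\dots,B-1$, together with $\{R\}$, partition $[L,R]$. Every $d\in[L,R]$ therefore yields the same $\phi_d$ as the representative of its interval, or as $R$ itself. So the enumeration realises every distinct partition induced on $[L,R]$, and canonicalisation by $\operatorname{can}$ only removes duplicates. On each piece, both $N(d)=N(\phi_d)$ and $J(\phi_d)$ are functions of $\phi_d$ alone, since the abstract MDP, its tie-broken solution $\pi_\phi$ and the lifted policy $\widetilde\pi_\phi$ are all determined by $\phi$. Hence $\{J(\phi_d):d\in[L,R],\ N(d)\le K\}$ equals the set of $J$-values over the enumerated feasible candidates. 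This set is finite because $\mathcal B(W)$ is finite: for each $s$ with $V^*(s)>0$, only the finitely many $m$ with $V^*(s)/R<m<V^*(s)/L$ qualify, using $L>0$. It is also nonempty by the feasibility assumption. A finite nonempty set has a minimum, so the minimum is attained by some evaluated candidate.

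\textbf{Main obstacle.} The delicate step is the endpoint convention in Steps 1 and 2. Because $\lceil x\rceil$ is left-continuous in $x$, the map $d\mapsto\lceil v/d\rceil$ is right-continuous in $d$, so each piece must be closed on the left and open on the right. I will verify this directly from $\lceil x\rceil=m\iff x\in(m-1,m]$. This also shows why $R$ must be evaluated separately: $R$ may itself equal some $V^*(s)/m$, in which case $\phi_R$ can differ from $\phi_d$ for $d$ slightly below $R$.
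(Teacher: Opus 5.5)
Your proof is correct and follows essentially the same route as the paper's: right-continuity of $d\mapsto\lceil v/d\rceil$ with jumps only at $v/m$, constancy of the full map $\phi_d$ on each $[\beta_i,\beta_{i+1})$, separate evaluation of $R$, and exhaustive comparison over a finite nonempty candidate set. The one point you leave implicit is one the paper states outright: the builder, solver and lift are invariant to block relabelling, and this is why evaluating $\operatorname{can}(\phi_d)$ is equivalent to evaluating $\phi_d$.
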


Let $d_b$ be the feasible divisor returned by \RepairedBinary. The reported
binary64 implementation uses
$d_{\min}:=\max\{10^{-10}V_{\max},10^{-12}\}$ and sets
$L:=\max\{d_b/\rho,d_{\min}\}$,
$R:=\min\{\rho d_b,V_{\max}\}$, and $W:=[L,R]$.
We fix $\rho=\windowRho$ before evaluation as a coverage--cost choice, not a
theoretically optimal value, and use $10^{-4}$ only as the stopping tolerance of
the Binary anchor search; neither is updated during a run. The numerical floor
is an implementation convention, not a theorem assumption. Every reported run
verifies $0<d_{\min}\le d_b\le V_{\max}$ and a nondegenerate window $L<R$.
The window is fixed, not updated: the guarantee covers $[L,R]$ but not
$d\notin[L,R]$. Because
$\operatorname{can}(\phi_{d_b})$ is retained, Windowed cannot return a larger
gap than \RepairedBinary under the same builder and evaluator. If $C_W$ is the
number of distinct feasible candidates in a completed window, then
$C_W\le|\mathcal S|\lceil V_{\max}/L\rceil+2$; the tighter incidence count and
binary64 guard are given in \cref{app:impl}.

\textsc{EndpointRepairedDivisorSearch} denotes only the divisor-selection loop
underlying Algorithm~4 of \citet{fm2020}, augmented to retain the feasible upper
endpoint $d=V_{\max}$. Its probes construct $\phi_d$ only to test
$N(d)\le K$ and return the anchor $d_b$; they do not construct or solve an
abstract \MDP. \Cref{alg:windowed} presents the complete Windowed procedure,
including the subsequent partition enumeration and
abstract-MDP evaluations.

\begin{algorithm}[H]
\caption{Windowed A-$K$-MDP}
\label{alg:windowed}
\begin{algorithmic}[1]
\Require solved MDP $M$ with $V^*\ge0$; lowest-index tie-broken $\pi^*$;
state budget $K$;
$\rho>1$; Binary stopping tolerance $\delta_b>0$
\State $V_{\max}\gets\max_{s\in\mathcal S}V^*(s)$
\State \textbf{assert} $V_{\max}>0$ and $N(V_{\max})\le K$
\State $d_{\min}\gets\max\{10^{-10}V_{\max},10^{-12}\}$
\State $d_b\gets
  \Call{EndpointRepairedDivisorSearch}{V^*,\pi^*,K,\delta_b}$
\State \textbf{assert} $N(d_b)\le K$ and
       $0<d_{\min}\le d_b\le V_{\max}$
\State $L\gets\max\{d_b/\rho,d_{\min}\}$
\State $R\gets\min\{\rho d_b,V_{\max}\}$;
       $W\gets[L,R]$ \Comment{fixed throughout the run}
\State \textbf{assert} $L<R$
\State construct and sort
       $\mathcal B(W)=\{\beta_0,\ldots,\beta_B\}$ using
       \cref{eq:breakpoints,eq:m-range},
       where $L=\beta_0<\cdots<\beta_B=R$
\State $\mathcal D\gets
       \{(\beta_i+\beta_{i+1})/2:0\le i<B\}
       \cup\{R,d_b\}$
\State $\mathcal C\gets
       [\,\operatorname{can}(\phi_{d_b})\,]$
       \Comment{retain Binary}
\For{$d\in\mathcal D$}
  \State $\phi\gets\operatorname{can}(\phi_d)$
  \If{$N(\phi)\le K$ and $\phi\notin\mathcal C$}
    \State append $\phi$ to $\mathcal C$
  \EndIf
\EndFor
\For{$\phi\in\mathcal C$ in Critical order (\cref{app:equal-budget}),
       Binary first}
  \State build and solve the compact MDP and compute $J(\phi)$
  \If{$J(\phi)=0$}
    \State \Return $\phi$, \textsc{zero-gap-found}
  \EndIf
\EndFor
\State \Return
$\arg\min_{\phi\in\mathcal C}J(\phi)$,
\textsc{all-candidates-checked}
\end{algorithmic}
\end{algorithm}

The algorithm shows the complete branch. The implementation first computes a
finite breakpoint-incidence bound. If it exceeds $I_{\max}=100{,}000$, a
deterministic bounded stream evaluates at most
$E_{\max}=\windowEvaluationCap$ distinct candidates. Reaching $J=0$ still
certifies the objective lower bound; a positive capped run is labelled
\textsc{lowest-gap-found} and has no complete-window claim. Exact bounds,
binary64 guards, run outcomes, and the proof of \cref{thm:complete} appear in
\cref{app:impl,app:proof}. Throughout, $E$ denotes the realised number of
abstract-MDP solves and $C_W$ the complete family size. Only the bounded branch
enforces $E\le E_{\max}$; complete runs may have $E>E_{\max}$.

\section{Results}

We evaluate \devRows\ case--$K$ pairs from \devMDPs\ solved \MDPs, seven ecological models and three controls. 
All comparison uses the full action set $\mathcal A$, the same abstract-\MDP builder and evaluator, differing only in the candidate partitions. 
Windowed improves \allWin\ rows and ties \allTie\
because it retains the repaired Binary candidate; on the \informativeRows\
informative completed windows, it improves/ties \infWin/\infTie. Three bounded
\emph{Aedes} runs reach $J=0$, whereas $K=305$ stops at $E_{\max}=\windowEvaluationCap$ and is reported as \textsc{lowest-gap-found}. \Cref{tab:main} and the appendix provide the remaining accounting.

\begin{table}[ht]
  \caption{Representative raw-native comparisons using the full action set
  $\mathcal A$. $C_W$ is the complete candidate-family size and $E$ the
  realised number of abstract-\MDP solves. Dashes denote bounded \emph{Aedes}
  runs; rows with $E<C_W$ stopped at $J=0$. The \SONA row is the local
  $K=50$ result. Gaps use model-native reward units and are comparable only
  within rows.}
  \label{tab:main}
  \centering
  \begin{tabular}{lrrrrrr}
    \toprule
    Model & $|\mathcal S|$ & $K$ & $C_W$ & $E$ & Binary $J$ & Windowed $J$ \\
    \midrule
    \emph{Aedes} & 6,097 & 61  & -- & \aedesSixtyOneEvals & \aedesSixtyOneBinary & \textbf{0} \\
                  & 6,097 & 305 & -- & \aedesEvalCount & \aedesBinary & \textbf{\aedesWindowed} \\
    Gouldian finch & \gouldThirteenStates & 8 & \gouldEightCandidates & \gouldEightCandidates & \gouldEightBinary & \textbf{\gouldEightWindowed} \\
                   & \gouldThirteenStates & 13 & \gouldThirteenCandidates & \gouldThirteenEvals & \gouldThirteenBinary & \textbf{0} \\
    Fisheries & 1,001 & 13 & \fishThirteenCands & \fishThirteenCands & \fishThirteenBinary & \textbf{\fishThirteenWindowed} \\
    \SONA & 819 & 50 & \sonaFiftyCandidates & \sonaFiftyCandidates & \sonaFiftyBinary & \textbf{\sonaFiftyWindowed} \\
    Fire & 91 & 4 & \fireWindowCandidates & \fireZeroEvals & \fireBinary & \textbf{0} \\
    Forest & 1,000 & 4 & \forestWindowCandidates & \forestZeroEvals & \forestBinary & \textbf{0} \\
    Reserve & 2,187 & 30 & \reserveWindowCandidates & \reserveZeroEvals & \reserveBinary & \textbf{0} \\
    \bottomrule
  \end{tabular}
\end{table}

\Cref{tab:main} reports representative rows; the complete ledger is in the
artifact. At Gouldian $K=13$, Critical reaches $J=0$ after
$E=\gouldThirteenEvals$ of $C_W=\gouldThirteenCandidates$ candidate solves.
Across the \informativeRows\ informative completed windows, Windowed
improves/ties \infWin/\infTie. The separately frozen extension improves/ties
five/three of its eight nonzero rows; its protocol is in
\cref{app:evaluation}. Equal-budget and cross-$K$ \SONA analyses are reported in
\cref{app:equal-budget,app:sona-cross-k}.

\begin{reviewresponse}

\paragraph{A changed management decision.}
In the \gouldThirteenStates-state fully observable Gouldian-finch projection
(not the original partially observable problem) \citep{chades2012momdps},
Binary at $K=13$ disagrees with the tie-broken ground policy in
\gouldThirteenBinaryDisagreements states and has
$J=\gouldThirteenBinary$. Windowed uses the same 13 blocks, restores every
ground-policy action, and reaches $J=0$. At state \gouldWitnessState, Binary
selects DN whereas the ground and Windowed policies select FG; the divisor
search, not the compactness budget, introduced this change.

\begin{table}[ht]
  \caption{Gouldian-finch comparison at $K=\gouldThirteenK$ with full action set
  $\mathcal A$. Agreement uses the fixed tie-broken ground policy; at
  zero-based state \gouldWitnessState, DN/FG denote do nothing/improved
  fire-and-grazing management \citep{chades2012momdps}.}
  \label{tab:policy-case}
  \centering
  \begin{tabular}{lrrrr}
    \toprule
    Selected policy & Blocks & Agreement & $J$ & Action at $s=\gouldWitnessState$ \\
    \midrule
    Ground reference & --- & 100\% & 0 & \gouldWitnessOptimalAction \\
    Binary \AKMDP & \gouldThirteenK & \gouldThirteenBinaryAgreement & \gouldThirteenBinary & \gouldWitnessBinaryAction \\
    \WindowedAKMDP & \gouldThirteenK & \textbf{\gouldThirteenWindowedAgreement} & \textbf{0} & \textbf{\gouldWitnessOptimalAction} \\
    \bottomrule
  \end{tabular}
\end{table}

\end{reviewresponse}

\section{Conclusions}

MDP policies are often simplified in an ad-hoc manner to increase uptake by conservation managers, but compact representation doesn't necessarily need to come at a performance loss. Improving on \citep{fm2020}, we have shown that midpoint binary search can miss A-K-MDP partitions because \(N(d)\) is not monotone. Our \WindowedAKMDP provides a systematic search of the distinct partitions within a stated window, improving 25 of 33 cases and tying the remainder. For conservation managers, this reduces the risk that a policy is simplified at an unnecessary cost to decision performance. Although compactness alone does not establish interpretability, it provides a stronger basis for examining and implementing simplified policies.

\clearpage
\bibliographystyle{plainnat}
\bibliography{reference}


\clearpage

\section*{Limitations and scope}

The guarantee is limited to the declared A-$K$-MDP family, divisor window, and
tie-breaking rule; it is not global over arbitrary partitions or all $d>0$.
The repeated budgets are descriptive rather than independent replications, and
neither small $K$ nor low $J$ establishes human interpretability, ecological
validity, or ease of implementation.

\section*{Code and evidence availability}

The accompanying anonymised artifact records the protocols, model identities,
candidate and policy hashes, verification scripts, and rows underlying each
reported table and figure.

\appendix
\section{Non-monotone feasibility}
\label{app:nonmono}

\begin{proposition}
\label{prop:nonmono}
The predicate $\mathbb{1}\{N(d)\le K\}$ can be non-monotone in $d$ even for two
states sharing one action, with $K=1$ and nonnegative values.
\end{proposition}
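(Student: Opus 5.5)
The plan is to exhibit an explicit two-state instance and evaluate $N(d)$ at three divisors; it is the same example sketched at the start of the Windowed search section. Take $\mathcal S=\{s_1,s_2\}$ and a single action available in both states, so $\pi^*(s_1)=\pi^*(s_2)$ under any tie-breaking. Choose rewards and transitions so that $V^*=(2,3)$. For instance, make each state absorbing with constant rewards $r(s_1)=2(1-\gamma)$ and $r(s_2)=3(1-\gamma)$ for some fixed $\gamma\in[0,1)$. Both values are nonnegative, and $V_{\max}=3$, so the standing assumptions of the problem formulation hold.

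Because the action component of $\phi_d$ is shared, $N(d)$ equals the number of distinct bin indices $\lceil V^*(s)/d\rceil$. The computation at three divisors is as follows.

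\begin{itemize}
\item At $d=3/2$: $\lceil 4/3\rceil=2$ and $\lceil 2\rceil=2$, so $N=1$.
\item At $d=2$: $\lceil 1\rceil=1$ and $\lceil 3/2\rceil=2$, so $N=2$.
\item At $d=3$: $\lceil 2/3\rceil=1$ and $\lceil 1\rceil=1$, so $N=1$.
\end{itemize}

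With $K=1$, the indicator $\mathbb 1\{N(d)\le 1\}$ therefore takes the values $1,0,1$ along the increasing sequence $3/2<2<3$. A function with this pattern is neither nonincreasing nor nondecreasing, which proves the claim. The budget condition $N(V_{\max})=N(3)=1\le K$ also holds, so the instance lies inside the restricted regime studied in the paper.

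The only step that needs care is exact rounding at the bin boundaries: $V^*/d$ is an integer at $d=3/2$ for $s_2$ and at $d=2$ for $s_1$, and the ceiling must not round these up. This is also where a floating-point implementation could misbehave. Since the proposition is a purely mathematical statement, exact arithmetic settles it directly.
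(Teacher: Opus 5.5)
Your proof is correct and matches the paper's own: the same shared-action instance $V^*=(2,3)$ with $K=1$, evaluated at $d=3/2,2,3$, so that feasibility is true, false, true. Your explicit absorbing-state MDP realising these values is a small addition that the paper leaves implicit, and the paper adds a minimality remark that the statement does not require (a single state, or two states with different actions, gives constant $N(d)$).
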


\begin{proof}
Take $V^*=(2,3)$. The value bins are $(2,2)$ at $d=3/2$, $(1,2)$ at $d=2$, and
$(1,1)$ at $d=3$, so feasibility is true, false, then true as $d$ grows. A
single state gives $N(d)\equiv1$, while two states with different optimal
actions give $N(d)\equiv2$; both cases are monotone. Thus the two-state,
shared-action example is minimal.
\end{proof}

The failure can also change which states are grouped. With $V^*=(3,5,8)$, one
shared action, and $K=2$, every $d\in[5/2,3)$ induces
\[
  \{\{1,2\},\{3\}\},
\]
whereas $d=4$ induces
\[
  \{\{1\},\{2,3\}\}.
\]
Bisecting $[0,8]$ probes $4,2,3,7/2,\dots$ and converges to the upper feasible
region without evaluating the earlier feasible interval
(\cref{fig:nonmono}).

\begin{figure}[ht]
  \centering
  \includegraphics[width=0.84\linewidth]{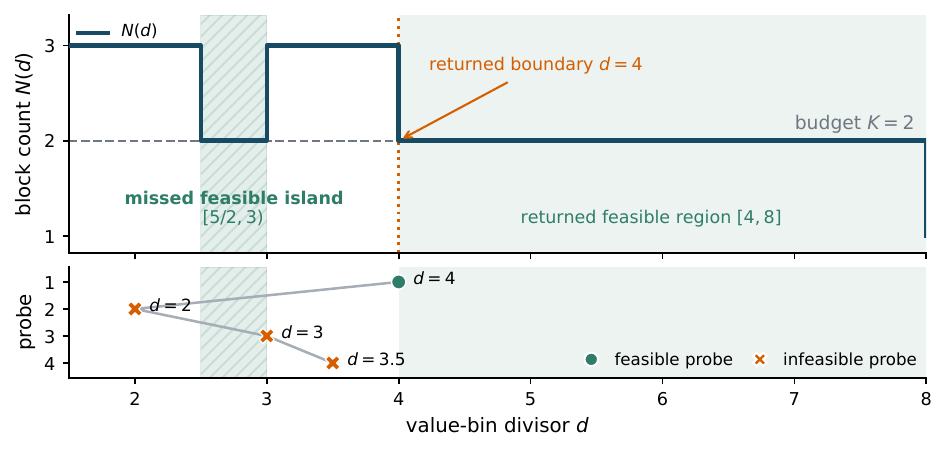}
  \caption{A binary-search trajectory that misses a feasible interval. The
  shaded regions satisfy $N(d)\le K=2$ for $V^*=(3,5,8)$. After probing
  $d=4,2,3,7/2$, the search contracts toward $4$ without evaluating
  $[5/2,3)$, whose induced partition differs from the returned partition.}
  \label{fig:nonmono}
\end{figure}

\section{Proof of \texorpdfstring{\cref{thm:complete}}{Theorem 1}}
\label{app:proof}

Fix $s$ with $v=V^*(s)>0$. For $m\ge2$,
$\lceil v/d\rceil=m$ exactly when $v/m\le d<v/(m-1)$; the value is one when
$d\ge v$. Thus $\lceil v/d\rceil$ changes only at $d=v/m$. At $b=v/m$ it
equals $m$, immediately left of $b$ it equals $m+1$, and immediately right of
$b$ it remains $m$. The coordinate is
therefore right-continuous and constant on each $[\beta_i,\beta_{i+1})$. A state
with $V^*(s)=0$ has $\lceil0/d\rceil=0$ for all $d>0$ and never changes, and
every action label $\pi^*(s)$ is fixed. Hence the full mapping vector, and so
the induced partition, is constant on each half-open interval; the mapping at
the closed endpoint $R$ is evaluated separately. Canonicalisation relabels blocks
by first occurrence, which removes only label permutations of the same
equivalence relation, and the deterministic builder, solver, and lift are
invariant to those. Exhaustive comparison over the resulting finite set therefore
attains the stated minimum. \qed

\section{Implementation notes}
\label{app:impl}

\paragraph{Finite candidate bound.}
For $v=V^*(s)>0$, only integers in
\begin{equation}
  \left\lfloor\frac{v}{R}\right\rfloor+1
  \le m\le
  \left\lceil\frac{v}{L}\right\rceil-1
  \label{eq:m-range}
\end{equation}
contribute breakpoints. Define
\begin{equation}
  I_W:=
  \sum_{v\in\operatorname{uniq}\{V^*(s):V^*(s)>0\}}
  \left|\left\{m\in\mathbb N_+:L<\frac{v}{m}<R\right\}\right|.
  \label{eq:breakpoint-incidences}
\end{equation}
The complete canonical candidate-family size $C_W$ satisfies
\begin{equation}
  C_W\le|\mathcal B(W)|\le I_W+2
  \le|\mathcal S|\left\lceil\frac{V_{\max}}{L}\right\rceil+2.
  \label{eq:candidate-bound}
\end{equation}
Only distinct feasible mappings require abstract-\MDP construction and
solution. A candidate evaluation is one such solve; $E$ is the realised number
and $E_{\max}$ is a cap, not a realised count.

\paragraph{Divisor search versus candidate evaluation.}
\textsc{EndpointRepairedDivisorSearch} is only the divisor-selection loop of
Algorithm~4 in \citet{fm2020}, augmented to retain $d=V_{\max}$. Its probes
construct $\phi_d$ only to test $N(d)\le K$; the abstract \MDP for the returned
anchor is built later when $\phi_{d_b}$ is evaluated as a candidate.

\paragraph{Floating-point implementation.}
\Cref{thm:complete} is an exact-arithmetic statement. In binary64 the midpoint
$(\beta_i+\beta_{i+1})/2$ can round to $\beta_{i+1}$ when consecutive breakpoints
are one unit in the last place (ULP) apart, so the representative would probe the
neighbouring interval. The implementation therefore evaluates each computed
boundary and its two adjacent representable values. Finite-precision
completeness is defined with respect to the distinct partitions generated under
the recorded binary64 convention. The saved canonical mappings and lifted
policies, rather than rounded decimal divisor values, identify the evaluated
candidates.

\paragraph{Bounded enumeration.}
Before enumerating, the implementation computes the breakpoint-incidence
count $I_W$ in \cref{eq:breakpoint-incidences}.
If $I_W\le I_{\max}=100{,}000$, it constructs the complete candidate set and
uses the deterministic ordering in \cref{app:equal-budget}. Otherwise, it
constructs a deterministic adaptive prefix by repeatedly probing the widest
remaining interval in $\log d$ and evaluates at most
$E_{\max}=\windowEvaluationCap$ candidates. The realised number of abstract-\MDP
solves is $E\le E_{\max}$. A capped run that ends with a positive value gap
reports \textsc{lowest-gap-found} and does not claim that every candidate in the
window was evaluated.

\paragraph{Meaning of the run outcomes.}
A run reports \textsc{all-candidates-checked} when every distinct feasible
partition induced within the declared window has been evaluated. A run may
stop earlier with \textsc{zero-gap-found} after verifying $J=0$, because zero
is the lowest possible value gap. A capped run that ends with $J>0$ reports
\textsc{lowest-gap-found}; this is the smallest gap among the evaluated
candidates but need not be the smallest gap over the complete window.

\section{Model scope and provenance}
\label{app:model-provenance}

\Cref{tab:model-provenance} states the exact computational object used in each
case. In particular, ``package instance'' is not synonymous with reproducing a
published table. Value gap is reported only within a row: different reward scales
and discounts make its magnitude incomparable across domains.

\begin{table}[ht]
  \caption{Artifact scope and discount factors for the ten-\MDP corpus. The
  downloaded Reserve and grey-wolf instances differ from the dimensions of the
  corresponding published cases; Gouldian is a fully observable latent-state
  projection rather than the original \MOMDP policy.}
  \label{tab:model-provenance}
  \centering
  \begin{tabularx}{\textwidth}{lrrr>{\raggedright\arraybackslash}X}
    \toprule
    Model & $|\mathcal S|$ & $|\mathcal A|$ & $\gamma$ & Artifact and treatment \\
    \midrule
    \emph{Aedes} & 6,097 & 17 & $1-10^{-8}$ & Checksum-locked official three-island generator \citep{peron2017simultaneous}; six-month step and near-undiscounted persistence objective. \\
    Fire & 91 & 2 & .96 & Constructed threatened-species fire-management replication. \\
    Fisheries & 1,001 & 11 & .96 & Downloaded \KMDP package instance. \\
    Forest & 1,000 & 2 & .99 & Constructed forest-management benchmark. \\
    Gouldian finch & 162 & 4 & .90 & Latent-state \MDP projection of the Gouldian \MOMDP \citep{chades2012momdps}; observations and initial belief omitted. \\
    Grey wolves & 1,000 & 4 & .96 & Downloaded package instance; not the dimensionally different published wolf cases. \\
    HIV & 6 & 2 & .97 & Deposited scenario 000 from a 72-model ensemble \citep{ahluwalia2021policy}. \\
    Maintenance & 10 & 4 & .97 & Seeded realization of the deposited generator \citep{ahluwalia2021policy}; rewards shifted by $+20$ per step, preserving value gap rankings but changing the $\phi_d$ family. \\
    Reserve & 2,187 & 7 & .96 & Downloaded package instance; distinct from the published 729-state, six-action case. \\
    \SONA & 819 & 4 & .96 & Downloaded sea-otter--northern-abalone package instance; state-coordinate file absent. \\
    \bottomrule
  \end{tabularx}
\end{table}

The \heldoutRows-row extension uses a \queueStates-state tandem-queue
controller from Marmote \citep{marmote2026}, a \inventoryStates-state
inventory-control model from QuantEcon \citep{quanteconDP}, and a
\busStates-state bus-engine replacement adapter derived from ruspy
\citep{ruspySoftware}. The accompanying artifact records the source URLs,
revisions, licences, generated arrays, and model hashes used in these
experiments.

\section{Evaluation details and representative results}
\label{app:evaluation}

All comparisons retain the full action set $\mathcal A$, respect
state-dependent availability, and use the same abstract-\MDP builder, solver,
lifting rule, and objective. The \heldoutRows-row extension uses tandem-queue,
inventory-control, and bus-replacement models. Before observing their gaps, we
set $K_0:=|\{\pi^*(s):s\in\mathcal S\}|$ and fixed
$K=K_0+\lceil q(|\mathcal S|-K_0)\rceil$ for
$q\in\{0,0.05,0.10,0.20\}$. Windowed improves/ties \RepairedBinary in
\heldoutWindowedWin/\heldoutWindowedTie\ of the \heldoutNonzero\ nonzero rows.

\section{Equal-budget search comparison}
\label{app:equal-budget}

Every comparator starts with Binary and targets
$E_{\mathrm{eff}}=\min\{\targetBudget,C_W\}$ distinct feasible mappings in the
same window. Critical explores endpoints and midpoints of feasible intervals
nearest $\log d_b$ first and removes duplicates. Over the \informativeRows-row
subset defined in Results, it outperforms/ties/underperforms the better of
log-grid and the \randomSeedCount-seed log-random median in \critVsBetter\ rows
(Fisheries 4/1/2, Gouldian 1/8/0, \SONA 1/0/0). Log-grid uses a base-2 van der
Corput sequence in $\log d$; log-random samples uniformly in log space for fixed
seeds 0--29. A random seed reaches the in-window optimum in every informative
row. Thus we claim complete in-window coverage, not universal search-order
superiority.

\section{Cross-budget \SONA analysis}
\label{app:sona-cross-k}

For the focused \SONA analysis, we sweep every integer
$K\in\{4,\ldots,50\}$ using the full action set $\mathcal A$, while respecting
state-dependent action availability, and combine the distinct partitions from
all 47 declared windows. Independently selected windows do not produce a
monotone at-most-$K$ result: the local search attains $J=0$ for
$K=6,\ldots,13$ but has positive value gap at $K=14$. Combining the distinct
candidates from all declared windows retains a six-block policy with $J=0$ and
100\% agreement with the tie-broken ground policy for every $K\ge6$. Thus six
blocks are sufficient within the declared collection of windows, but this does
not prove that six is minimal over all possible partitions. The plotted
normalised gap is
\[
  100\,J/\max\{\max_{s\in\mathcal S}|V^*(s)|,10^{-15}\};
\]
for this raw-native \SONA instance the denominator is
$\lVert V^*\rVert_\infty=10.666566655282647$.

\begin{figure}[ht!]
  \centering
  \includegraphics[width=0.90\linewidth]{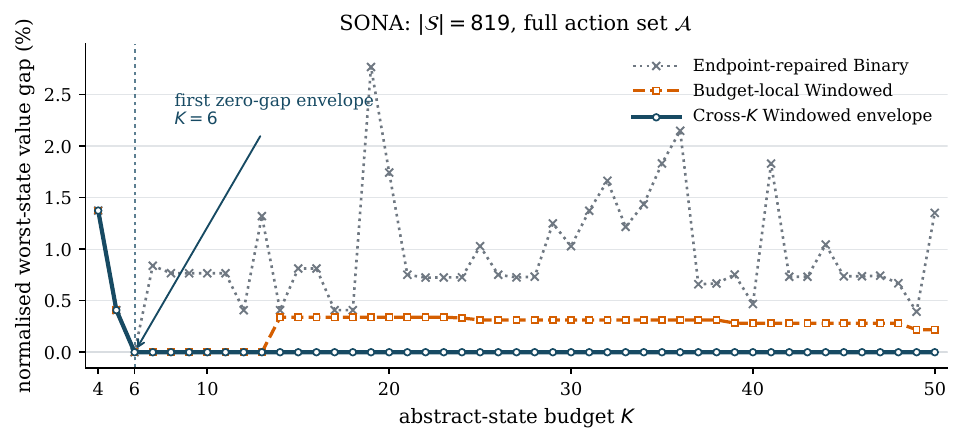}
  \caption{\SONA worst-state value gap as a percentage of
    $\lVert V^*\rVert_\infty$, using the full action set $\mathcal A$.
    Independently anchored local Windowed optima are non-monotone in $K$.
    Combining candidates across the declared windows first gives zero gap at
    $K=6$ and retains that policy for every larger budget.}
  \label{fig:sona-ks-frontier}
\end{figure}

For a fixed finite set of budgets $\mathcal K$, let $\mathcal C_k$ be the
complete candidate family from the declared window at $k$ and define
\begin{equation}
  \mathcal C^{\cup}:=\bigcup_{k\in\mathcal K}
  \{\operatorname{can}(\phi):\phi\in\mathcal C_k\},
  \qquad
  J^{\cup}(K):=
  \min_{\substack{\phi\in\mathcal C^{\cup}\\N(\phi)\le K}}J(\phi).
  \label{eq:cross-k}
\end{equation}
Exhausting every declared window gives the exact minimum over this fixed union,
and $J^{\cup}(K)$ is non-increasing because the eligible set can only grow with
$K$. This is not an optimum over arbitrary partitions.

\section{Optional action-set sensitivity}
\label{app:action-sensitivity}
This appendix experiment is separate from the core \AKMDP formulation.
At $K=50$, for each $B_A$ we exhaustively compare every retained action set
$\mathcal U\subseteq\mathcal A$ with $|\mathcal U|\le B_A$ and
$\mathcal U\cap\mathcal A(s)\ne\varnothing$ for every $s$. We solve each
restricted ground \MDP and use
$\mathcal A_{\phi,\mathcal U}(k):=
\mathcal U\cap\bigcap_{s\in B_k}\mathcal A(s)$ in its abstract \MDP, reporting
the lowest gap attained within the corresponding declared local windows.
I, AP, C, and H denote introduction, antipoaching, control, and half
AP/control. Windowed ties Binary at $B_A=2$ and reduces value gap by 27.0\% and
60.6\% at $B_A=3$ and $B_A=4$, respectively. Because the methods may select
different action sets, these results are not directly comparable with the
main-text experiment using the full action set $\mathcal A$.

\begin{table}[ht!]
  \caption{Optional \SONA action-set sensitivity at $K=50$, exhaustive over
    feasible retained action sets $\mathcal U\subseteq\mathcal A$ satisfying
    $|\mathcal U|\le B_A$ and the corresponding declared local Windowed
    families. This experiment is separate from the core state-budget
    comparison.}
  \label{tab:sona-action-budget}
  \centering
  \begin{tabular}{clrlr}
    \toprule
    $B_A$ & Binary actions & Binary gap
      & Windowed actions & Windowed gap \\
    \midrule
    2 & I+AP            & 0.382075 & I+AP     & 0.382075 \\
    3 & I+AP+C          & 0.059130 & I+AP+H   & \textbf{0.043192} \\
    4 & I+AP+C (3 used) & 0.059130 & I+AP+C+H & \textbf{0.023315} \\
    \bottomrule
  \end{tabular}
\end{table}

\end{document}